\documentclass[11pt]{article}

\usepackage[preprint]{acl}

\usepackage{times}
\usepackage{latexsym}

\usepackage[T1]{fontenc}

\usepackage[utf8]{inputenc}

\usepackage{microtype}
\usepackage{hyperref}
\usepackage{inconsolata}
\usepackage{amsmath}
\usepackage{amssymb}
\usepackage{amsthm}
\usepackage{graphicx}
\usepackage{booktabs}
\usepackage{colortbl}
\usepackage[normalem]{ulem}

\definecolor{headercol}{HTML}{E8E8E8}    
\definecolor{familycol}{HTML}{EDEDED}    
\definecolor{familytxt}{HTML}{8A8A8A}    
\definecolor{bestcol}{HTML}{DAE6F4}      
\definecolor{secondcol}{HTML}{FCF2D2}    
\definecolor{gaincol}{HTML}{1F7A4C}      
\arrayrulecolor{black}  
 
\newcommand{\gainval}[1]{\textcolor{gaincol}{$+#1$}}        
\newcommand{\best}[1]{\colorbox{bestcol}{\strut\textbf{#1}}}  
\newcommand{\snd}[1]{\colorbox{secondcol}{\strut\underline{#1}}}  
\newcommand{\famlabel}[1]{\textcolor{familytxt}{\textit{#1}}}  

\newtheorem{proposition}{Proposition}

\newif\ifprintcomments

\printcommentstrue

\usepackage{algorithm}      
\usepackage{algorithmic}    
\usepackage{xcolor}     
\usepackage[table]{xcolor}
\usepackage{multirow} 

\newcommand{\methodname}{Hindsight Self-Distillation}
\newcommand{\method}{\textsc{HSD}}

\title{Localizing Credit at the Divergence: Path-Conditioned Self-Distillation for LLM Reasoning}

\author{Yu Li \quad Shu Hong \quad Tian Lan \\
  Department of Electrical and Computer Engineering,
  George Washington University \\
}

\begin{document}
\maketitle


\begin{abstract}
Reinforcement learning from verifiable rewards assigns a single scalar to each rollout, leaving token-level credit assignment underspecified in long reasoning traces. 
On-policy self-distillation addresses this by letting the same model act as a teacher conditioned on privileged information, producing a dense per-token signal. 
But the common choice of a ground-truth answer is only an endpoint cue: on terse-answer tasks, the teacher falls silent at the intermediate positions where path-level guidance matters most.
We propose \methodname{} (\method{}), which conditions the teacher on a successful peer rollout drawn from the current training group. Such a peer is an exact sample from the success-conditioned policy, requiring no additional sampled rollouts. 
By providing a full successful continuation rather than only the final answer,  the resulting credit signal concentrates at the divergence position between a failed rollout and a successful peer.
Across Qwen3-8B and Qwen3-32B on math and code benchmarks, \method{} obtains the best result against GRPO variants and on-policy distillation baselines, with the largest gains on terse-answer tasks such as AIME.
\end{abstract}
\section{Introduction}
\label{sec:intro}
Reinforcement learning from verifiable rewards (RLVR) has become a standard recipe for eliciting reasoning in large language models \citep{guo2025deepseek,yang2025qwen3,li2026right}. For each question, it samples a group of rollouts from the current policy, scores each with a verifier, and updates the policy toward the higher-reward trajectories \citep{shao2024deepseekmath}. 
The simplicity, however, comes with a credit-assignment bottleneck. A long reasoning trace may have a correct setup, one wrong algebraic step, and a final answer that follows from that mistake. Group-relative methods such as GRPO then assign the same scalar advantage to every token in the rollout. As a result, the update can tell that the trajectory failed, but not where it first failed~\citep{li2026oppo}.

On-policy self-distillation offers a way to turn outcome feedback into token-level supervision~\citep{zhao2026self,hubotter2026reinforcement,yang2026self}. 
The model plays two roles simultaneously: a teacher that conditions on the prompt plus a privileged context, typically the ground-truth answer $a^{\!\star}$, and a student that conditions on the prompt alone. 
The per-token KL between the two distributions defines dense credit that is nonzero wherever the answer shifts the teacher's prediction, giving the gradient position-level resolution that the scalar advantage lacks~\citep{song2026survey,li2026rethinking}. 
This makes the key question not whether self-distillation can provide dense gradients, but whether the privileged context contains the information needed to localize mistakes~\citep{yang2026reasoning}.

We examine this on reasoning benchmarks and find that terse answers are starved of the information needed to supervise intermediate reasoning paths~\citep{liu2025learning,li2025inspo,liu2026calibrating}. 
On tasks such as AIME, the answer may be only a three-digit number. Such an endpoint can constrain the final response span, but it does not specify which intermediate continuation would have preserved verifier success. Empirically, we find that answer-conditioned teachers produce almost no predictive shift across most of the chain of thought, with the signal appearing mainly near the final answer. Thus, the teacher is silent exactly where credit assignment matters most~\citep{ding2025micota,deng2024towards}.

The resolution to this bottleneck is already present in the RLVR training loop. For each question, group sampling often produces both failed and successful rollouts from the current policy.
Each successful rollout is an exact verified reasoning path generated by the model's own dynamics, available at zero additional rollout cost.

We propose \methodname{} (\method{}) that shifts privileged-context self-distillation to path-conditioning. Whenever a successful peer exists in the current rollout group, HSD conditions the teacher on that peer trajectory together with the ground-truth answer. When no peer succeeds, it falls back to the answer alone. 
This simple modification transforms the credit assignment landscape.
Conditioning the teacher on this peer trajectory gives the teacher path-level information, not just an endpoint. When a failed rollout and a successful peer share a prefix, the teacher-student discrepancy remains small on the shared prefix and concentrates near the first divergence, yielding a localized signal for where the failed trajectory departed from a successful continuation.

We evaluate \method{} on Qwen3-8B and Qwen3-32B against the leading variants of GRPO and the established on-policy distillation methods, including the hybrid that combines them. \method{} obtains the strongest math and code result on every column at both scales. 
The AIME benchmarks, which the analysis predicts to be hardest, show the largest margins, exceeding five points over the GRPO baseline at every scale and rising past seven points on Qwen3-32B AIME-24. 
Even against the strongest baseline, \method{} retains a several-point lead on math. The empirical credit profile matches the prediction, with near-zero values on the shared prefix and a spike at $\tau$.

\noindent Our key contributions are summarized as follows:
\begin{itemize}
\setlength{\itemsep}{0pt}
\setlength{\parskip}{0pt}
\setlength{\topsep}{-2pt}
\item We examine why endpoint-context self-distillation fails on terse-answer benchmarks: the answer constrains only the terminus of the trajectory, so the teacher goes silent at intermediate positions where path-level guidance is most needed.
\item We propose Hindsight Self-Distillation, a one-line modification that conditions the teacher on a successful peer from the same group. The path-based context localizes credit at the divergence position~$\tau$, and a coverage analysis yields testable predictions about where gains should appear.
\item We report controlled comparisons against seven baselines on Qwen3-8B and Qwen3-32B. \method{} obtains the best math and code result at both scales, with gains concentrated on the terse-answer benchmarks predicted to be hardest.
\end{itemize}
\section{Preliminary}
\label{sec:preliminaries}

\subsection{Related Work}
\label{subsec:background}

\textbf{Reinforcement Learning for LLM Reasoning.} 
Post-training language models with reinforcement learning has become the dominant approach for eliciting reasoning behavior on tasks with verifiable answers \citep{guo2025deepseek, yang2025qwen3}. The standard pipeline samples a group of rollouts from the current policy, scores each by a programmatic verifier, and updates the policy to favor the higher-scoring rollouts within each group. Group-relative policy optimization \citep{shao2024deepseekmath} has become the standard objective in such settings, since the within-group normalization removes the need for a separate value network. GRPO assigns to every token of rollout $i$ the sequence-level advantage
\begin{equation}
A^G_{i,t} \,=\, \frac{R_i - \mu}{\sigma},
\label{eq:grpo}
\end{equation}
where $R_i \in \{0, 1\}$ is the outcome reward and $\mu, \sigma$ are the mean and standard deviation of the rewards within the group. 
The policy is updated through a clipped policy gradient regularized by a reverse-KL penalty against a frozen reference policy. 
Two properties of Eq.~\eqref{eq:grpo} shape the design space we operate in. The advantage is identical for every token of a rollout, so the gradient signal cannot tell which positions in the trajectory were responsible for the outcome. The magnitude of the signal is determined entirely by the spread of rewards within the group, which collapses to zero when all rollouts succeed or all fail~\citep{zhong2026rc,li2026arise}.

\textbf{On-Policy Distillation.} 
Recent work \citep{agarwal2024policy, gu2024minillm,lu2025onpolicydistillation} replaces the sparse sequence-level signal of Eq.~\eqref{eq:grpo} with a dense per-token signal derived from a teacher, retaining the on-policy property that the student trains on its own rollouts while gaining the position-resolved supervision that GRPO lacks.
Variants including OPSD~\citep{zhao2026self} and SDPO~\citep{hubotter2026reinforcement} share a mechanism we refer to throughout as \textit{privileged-context self-distillation}: the policy plays two roles at once, with the student conditioning on the prompt alone and the teacher conditioning on the prompt augmented by a privileged context $c$ drawn from information unavailable at inference. 
The construction makes the teacher's predictions stronger than the student's at positions where $c$ is informative, and the gradient transfers that advantage back into the student without requiring a separately trained reward model~\citep{razin2026makes}.
Existing methods differ only in their choice of $c$. OPSD sets $c$ to the ground-truth answer. SDPO sets $c$ to environment feedback such as a runtime error.
Each option draws on information about the task, and the question of what else $c$ could draw on remains open.

\subsection{Problem Setup}
\label{subsec:setup}
 
The dataset $\mathcal{D} = \{(q, a^{\!\star})\}$ consists of question-answer pairs. The policy $\pi_\theta$ produces a group of $G$ rollouts $\{o_i\}_{i=1}^{G} \sim \pi_\theta(\cdot \mid q)$, with each $o_i = (o_{i,1}, \ldots, o_{i,|o_i|})$ a token sequence and $o_{i,<t}$ its prefix. Each rollout is scored by a verifier into $R_i \in \{0, 1\}$. Every training objective in the paper adds a reverse-KL regularizer $\Omega(\theta) = \beta\,\mathrm{KL}(\pi_\theta \,\|\, \pi_0)$ against a frozen reference policy $\pi_0$, following standard GRPO practice.
 
Privileged-context self-distillation chooses a context $c_i$ for each rollout $i$ and trains $\pi_\theta$ to match the teacher distribution $\pi_\theta(\cdot \mid q, c_i, o_{i,<t})$ on the student's own rollouts. The per-token credit is the log-likelihood ratio
\begin{equation}
A_{i,t} \,=\, \log \frac{\pi_\theta(o_{i,t} \mid q, c_i, o_{i,<t})}{\pi_\theta(o_{i,t} \mid q, o_{i,<t})},
\label{eq:credit}
\end{equation}
which vanishes when $c_i$ fails to shift the teacher's prediction at $t$ and recovers the supervised-learning gradient at the opposite limit where $c_i$ resolves the next token completely. We use the full-vocabulary reverse-KL form of \citet{zhao2026self} throughout, so Eq.~\eqref{eq:credit} appears as the integrand of the teacher-student KL divergence.
 
The setup assumes a verifiable reward signal, ground-truth answers in the training data, and a policy with enough in-context learning capacity to both use $c_i$ as the teacher and ignore it as the student. 
The first two hold by construction on reasoning benchmarks and Section~\ref{sec:experiments} verifies the third across scales.

\vspace{-2mm}
\section{The Privileged-Context Bottleneck}
\label{sec:bottleneck}
\vspace{-2mm}

RLVR delivers a single signal per completed trajectory, while 
on-policy distillation translates that signal into a dense per-token gradient by inserting a privileged context into the teacher's prompt.
This section asks what the ideal per-token credit looks like, shows that the standard answer-conditioned teacher fails to deliver it on terse-answer benchmarks, and identifies a source of path-level information already present in the training loop.

\vspace{-2mm}
\subsection{The Ideal Token-Level Teacher}
\label{subsec:ideal}
 \vspace{-2mm}
 
Given a prompt $q$ and the current rollout policy $\pi = \pi_\theta$,
let $R(o) \in \{0,1\}$ be the verifier outcome for a complete trajectory~$o$, and assume the prompt has nonzero success probability
\begin{equation}
  Z_\pi(q) \triangleq \Pr_{O \sim \pi(\cdot \mid q)}\!\big[R(O)=1\big] > 0.
\label{eq:success_prob}
\end{equation}
The verifier induces a success-conditioned trajectory distribution
\begin{equation}
  \pi^+(o \mid q)
  \;\triangleq\;
  \pi(o \mid q,\, R(o)=1)
  \;=\;
  \frac{\pi(o \mid q)\, R(o)}{Z_\pi(q)}.
\label{eq:success_dist}
\end{equation}
A teacher that supervised the student against $\pi^{+}$ directly would push the student to imitate its own successful behavior while suppressing failed trajectories.
The next-token form of $\pi^{+}$ is exact and reduces to a single application of Bayes' rule.

\begin{proposition}
[Verifier-conditioned token transform]
\label{prop:transform}
Let $h_t = (q, o_{<t})$ be a reachable prefix and define the \emph{success value}
\begin{equation}
  V_\pi(h_t) \;\triangleq\; \Pr_{O \sim \pi}\!\big[R(O)=1 \mid h_t\big].
\label{eq:value}
\end{equation}
For any next token~$y$ with $\pi(y \mid h_t) > 0$ and $V_\pi(h_t) > 0$,
\begin{equation}
  \pi^+(y \mid h_t)
  \;=\;
  \pi(y \mid h_t)\,
  \frac{V_\pi(h_t\, y)}{V_\pi(h_t)}.
\label{eq:ideal_token}
\end{equation}
Here $h_t\,y$ denotes the prefix~$h_t$ extended by one token~$y$, so $V_\pi(h_t\,y) = \Pr_{O \sim \pi}\big[R(O)=1 \mid h_t, O_t = y\big]$ is the success probability after committing to~$y$ at position~$t$.
\end{proposition}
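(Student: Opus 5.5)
The plan is to apply Bayes' rule to the event $\{R(O)=1\}$ under the autoregressive factorization of $\pi$. First I will pin down what $\pi^+(y \mid h_t)$ means. Since $\pi^+(\cdot \mid q)$ is a distribution over complete trajectories, its next-token conditional is the ratio of prefix marginals:
\begin{equation*}
\pi^+(y \mid h_t) = \frac{\pi^+(O_{<t}=o_{<t},\, O_t=y \mid q)}{\pi^+(O_{<t}=o_{<t} \mid q)}.
\end{equation*}
This is well defined once the denominator is positive, which I will verify below.

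The key step is to compute the prefix marginal of $\pi^+$ for an arbitrary prefix $h=(q,o_{<t})$. Summing Eq.~\eqref{eq:success_dist} over all completions $o$ that extend $o_{<t}$ gives
\begin{equation*}
\pi^+(O_{<t}=o_{<t}\mid q) = \frac{1}{Z_\pi(q)}\sum_{o \succeq o_{<t}} \pi(o\mid q)R(o) = \frac{\pi(o_{<t}\mid q)\,V_\pi(h_t)}{Z_\pi(q)}.
\end{equation*}
The second equality factors $\pi(o\mid q)=\pi(o_{<t}\mid q)\,\pi(\text{rest}\mid h_t)$ and recognizes the remaining sum as $\Pr_\pi[R=1\mid h_t]$, i.e. Eq.~\eqref{eq:value}. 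Reachability gives $\pi(o_{<t}\mid q)>0$, and together with the hypothesis $V_\pi(h_t)>0$ this makes the denominator positive. Applying the same identity to the extended prefix $h_t\,y$ gives
\begin{equation*}
\pi^+(o_{<t},y \mid q) = \frac{\pi(o_{<t}\mid q)\,\pi(y\mid h_t)\,V_\pi(h_t y)}{Z_\pi(q)}.
\end{equation*}
Here the factorization $\pi(o_{<t}y\mid q)=\pi(o_{<t}\mid q)\pi(y\mid h_t)$ is the chain rule, and $\pi(y\mid h_t)>0$ ensures $V_\pi(h_t y)$ is a well-defined conditional probability.

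Taking the ratio, the factors $Z_\pi(q)$ and $\pi(o_{<t}\mid q)$ cancel, which yields Eq.~\eqref{eq:ideal_token}.

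The only real obstacle is bookkeeping. Trajectories have variable length with an end-of-sequence token, so the sum over completions must range over all finite continuations, including the case where $y$ is the terminal token and $V_\pi(h_t y)=R(o_{<t}y)$. I will handle this by treating $V_\pi$ of a complete trajectory as its reward, so that the marginalization identity holds uniformly. Sums are countable and nonnegative, so interchange is unproblematic.
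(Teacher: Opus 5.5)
Your proposal is correct and is essentially the paper's argument. The paper writes $\pi^{+}(y \mid h_t)$ directly as $\Pr_{O\sim\pi}[O_t = y \mid h_t, R(O)=1]$ and applies Bayes' rule with the chain-rule factorization; your ratio of prefix marginals of $\pi^{+}$ is the same computation, with the extra step of deriving that identification from the trajectory-level definition in Eq.~\eqref{eq:success_dist} and handling the terminal-token case, both of which the paper leaves implicit.
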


Equation~\eqref{eq:ideal_token} names the ideal per-token credit: a token should be favored exactly when it raises the policy's probability of eventual verifier success.
The construction is the success-conditioning view of RL as inference~\citep{levine2018reinforcement}, and the multiplicative reweighting matches the twist used in twisted sequential Monte Carlo over trajectories~\citep{naesseth2015nested}.
An immediate consequence connects the ideal credit to the token-level advantage.

\begin{figure}[t]
\centering
\includegraphics[width=\columnwidth]{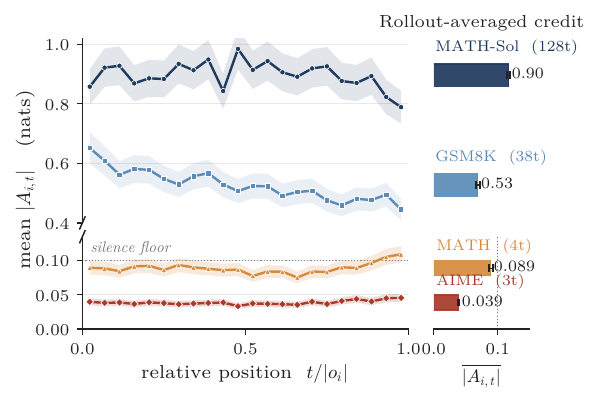}
\caption{Mean sampled-token predictive shift $|A^{(c)}_{i,t}|$ as a function of relative position $t/|o_i|$, averaged over $1{,}024$ Qwen3-8B rollouts per benchmark, with $\pm 1$~s.e.\ bands. Reference-context length is shown in parentheses. \textbf{Top:} verbose-reference benchmarks where $c_i$ is the full worked solution. \textbf{Bottom:} terse-target benchmarks where $c_i$ is the bare answer. The right marginal shows rollout-averaged shift per benchmark.}
\label{fig:shift}
\end{figure}

\subsection{The Silence of the Answer-Conditioned Teacher}
\label{subsec:diagnostic}
\vspace{-1mm}
The standard choice sets the privileged context to the ground-truth answer: $c = a^{\!\star}$.
For the answer-conditioned teacher $T_{\mathrm{ans}}(y \mid h_t) \triangleq \pi_\theta(y \mid q, a^{\!\star}, o_{<t})$ to reproduce the ideal credit in Eq.~\eqref{eq:success_dist}, the in-context substitution $c = a^{\!\star}$ would have to reconstruct the ratio $V_\pi(h_t\,y) / V_\pi(h_t)$ at every reachable prefix and candidate token.
The right-hand side asks for the full landscape of trajectories reachable from $h_t$ under the current policy.
Knowing the endpoint identifies which terminal state should be reached but does not by itself expose path-dependent success values at intermediate positions, so the answer-conditioned teacher recovers the ideal only at positions where the next token is constrained directly by the endpoint, essentially the final-answer span.

We measure the answer-conditioned teacher's realized credit through the sampled-token log-ratio $A^{(c)}_{i,t} = \log T_{\mathrm{ans}}(o_{i,t}\mid h_t) - \log \pi_\theta(o_{i,t}\mid h_t)$ across four benchmarks that vary the context while holding the task fixed at mathematical reasoning.
The verbose-reference settings sustain rollout-averaged shifts of $\overline{|A|} = 0.90$ and $0.53$ nats respectively. 
The bare-answer settings collapse to $\overline{|A|} = 0.089$ on MATH-numeric and $0.039$ on AIME, a $23\times$ drop monotone in context length.
The bottom panel of Figure~\ref{fig:shift} shows the per-position curves for the bare-answer regime, with the visible structure being an upward drift near the final-answer span, the one region where the endpoint constrains the teacher's prediction directly.

\vspace{-1mm}
\begin{figure}[!t]
\centering
\includegraphics[width=\columnwidth]{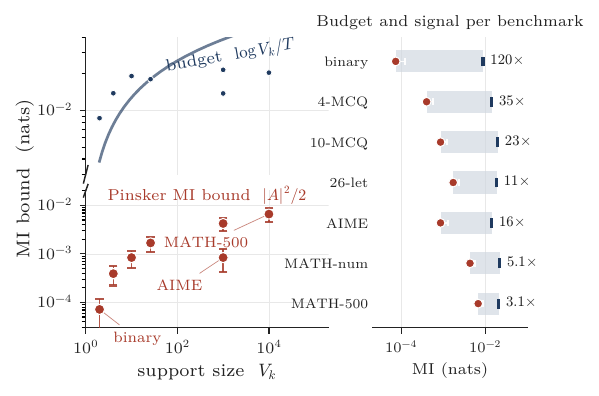}
\caption{Sampled-token predictive shift $\overline{|A|}$ across seven benchmarks, with answer support sizes ranging from $V_k = 2$ (binary) to $V_k \approx 10^4$. The squared shift $|A|^{2}/2$ serves as a proxy for the conditional information the context carries at each position; the support-size-scaled reference $\log V_k / T$ in the top panel sits one to two orders of magnitude above it. The shift shrinks monotonically as the answer becomes more compact, with the gap (right) widening on terse-answer settings.}
\label{fig:ceiling}
\label{fig:ceiling}
\end{figure}
\vspace{-1mm}

\paragraph{An information-theoretic perspective.}
The silence is not an artifact of model capacity but a structural property of the answer itself.

\vspace{-1mm}
\begin{proposition}[Endpoint-context information ceiling]
\label{prop:ceiling}
Let $a^{\!\star}$ be a random variable supported on at most $V_k$ distinct values, jointly distributed with the rollout $o = (o_1, \ldots, o_T)$ conditional on~$q$. Then
\begin{equation}
\sum_{t=1}^{T} I\!\left(o_t;\, a^{\!\star} \,\middle|\, q,\, o_{<t}\right) \;\leq\; \log V_k.
\label{eq:ceiling}
\end{equation}
\end{proposition}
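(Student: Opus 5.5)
The plan is to apply the chain rule for mutual information and then bound the resulting single mutual-information term by the entropy of the answer. Everything is conditioned on $q$ throughout, so I will suppress it where convenient.

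First, by the chain rule for mutual information applied to the sequence $o = (o_1,\ldots,o_T)$,
\begin{equation*}
\sum_{t=1}^{T} I\!\left(o_t;\, a^{\!\star} \,\middle|\, q,\, o_{<t}\right) \;=\; I\!\left(o_1,\ldots,o_T;\, a^{\!\star} \,\middle|\, q\right).
\end{equation*}
This follows by telescoping the identity $I(o_{\le t}; a^{\!\star}\mid q) - I(o_{<t}; a^{\!\star}\mid q) = I(o_t; a^{\!\star}\mid q, o_{<t})$, with $I(o_{\le 0}; a^{\!\star}\mid q)=0$. If $T$ is random (variable-length rollouts), I will pad every rollout with an absorbing end-of-sequence token to a fixed maximal length $T$. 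The padded terms contribute zero, and the identity is unchanged.

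Second, I bound the joint term by the entropy of the answer:
\begin{equation*}
I(o;\, a^{\!\star}\mid q) \;=\; H(a^{\!\star}\mid q) - H(a^{\!\star}\mid q, o) \;\le\; H(a^{\!\star}\mid q) \;\le\; \log V_k .
\end{equation*}
The first inequality uses nonnegativity of conditional entropy, which holds because $a^{\!\star}$ is discrete. The second uses the fact that a random variable supported on at most $V_k$ values has entropy at most $\log V_k$. This holds for each value of $q$, and hence also after averaging over $q$. Combining the two steps gives Eq.~\eqref{eq:ceiling}.

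The only delicate point is measure-theoretic: the tokens $o_t$ are discrete with finite vocabulary, so every quantity is a finite sum, and the chain rule needs no regularity conditions beyond this. The main thing to state carefully is the handling of variable-length $T$ via padding, and that conditioning on $q$ is applied uniformly on both sides. I do not expect any substantive obstacle. The bound is essentially the standard fact $I(X;Y)\le H(Y)$ combined with the chain rule.
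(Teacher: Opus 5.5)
Your proof is correct and is the same argument the paper uses: the chain rule collapses the sum to $I(o_{1:T}; a^{\!\star} \mid q)$, and this is bounded by $H(a^{\!\star} \mid q) \le \log V_k$. Your padding remark for variable-length rollouts is a harmless extra that the paper does not need, since the statement fixes $T$.
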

\vspace{-1mm}

\vspace{-1mm}
\begin{proof}
The chain rule of conditional mutual information gives
$\sum_{t} I(o_t; a^{\!\star} \mid q, o_{<t}) = I(o_{1:T}; a^{\!\star} \mid q) \leq H(a^{\!\star} \mid q) \leq \log V_k.$
\end{proof}
\vspace{-1mm}

For AIME ($V_k \approx 10^3$), this amounts to roughly $6.9$ nats across ${\sim}500$ positions. No answer-conditioned teacher, regardless of capacity, can extract more from $a^{\!\star}$ than the answer contains.
The bound corroborates the empirical pattern: the realized per-token shift sits one to two orders of magnitude below the amortized per-position budget $\log V_k / T$, and the gap grows monotonically as the answer becomes more compact (Figure~\ref{fig:ceiling}).

\vspace{-2mm}
\subsection{Successful Rollouts as Path-Level Witnesses}
\label{subsec:why}
 \vspace{-2mm}
If endpoint contexts fail because they constrain only the terminus of a trajectory, an effective privileged context must constrain the path. 
Worked reasoning paths are usually scarce in supervised data, but the group-relative training loop already produces them as a byproduct of group sampling.

\textbf{On-policy witnesses.} 
A group of $G$ rollouts $\{o_j\}_{j=1}^{G}$ is sampled i.i.d.\ from $\pi(\cdot \mid q)$ and scored into $\{R_j\}_{j=1}^{G}$. Filtering by $R_j = 1$ is exactly rejection sampling from $\pi^{+}$:
\begin{equation}
\begin{aligned}
\Pr[O = o_j \mid R(o_j) = 1,\, q]
\;&=\; \frac{\pi(o_j \mid q)\, R(o_j)}{Z_\pi(q)} \\
\;&=\; \pi^{+}(o_j \mid q).
\end{aligned}
\label{eq:rejection}
\end{equation}
Any verifier-accepted rollout $o^{+}$ from the current group is a fresh sample from the success-conditioned policy under $\pi_\theta$, obtained at zero additional rollout cost.
 
\textbf{From distribution to demonstration.}
The single sample $o^{+}$ enters the teacher's input as a worked path from $q$ to $a^{\!\star}$ that the student attends to and copies from. 
The construction does not estimate $V_\pi(h_t\,y) / V_\pi(h_t)$ explicitly; whether the resulting in-context teacher distribution actually approximates $\pi^{+}$ is an empirical question we settle in Figure~\ref{fig:credit-profile}. Figure~\ref{fig:teaser} contrasts the two regimes. Endpoint context produces a near-flat per-token shift on a failing rollout (top); path context $c = (a^{\!\star}, o^{+})$ yields the prefix-floor, spike-at-$\tau$, decay-beyond profile predicted by the construction (bottom). 
The construction breaks down at the boundaries $Z_\pi(q) \in \{0, 1\}$, where the group has either no successful rollout to witness or no failing rollout to supervise. Section~\ref{sec:method} formalizes the construction and the fallback.

\vspace{-1mm}
\begin{figure}[!t]
\centering
\includegraphics[width=\columnwidth]{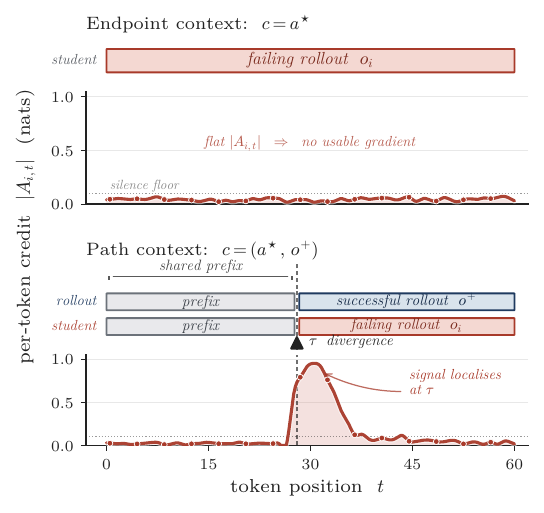}
\caption{Endpoint context versus path context. \textbf{Top:} conditioning the teacher on $a^{\!\star}$ alone produces a near-flat per-token shift on a failing rollout. \textbf{Bottom:} conditioning on $c=(a^{\!\star}, o^{+})$ with $o^{+}$ a successful peer from the same group. The two trajectories share a prefix up to a divergence position $\tau$, the per-token shift is near zero on the shared prefix, spikes at $\tau$, and decays beyond.}
\label{fig:teaser}
\end{figure}
\vspace{-1mm}

\section{Hindsight Self-Distillation}
\label{sec:method}

\subsection{Relabeling the Context in Hindsight}
\label{subsec:hindsight}

Consider a group of $G$ rollouts $\{o_j\}_{j=1}^{G}$ for question $q$ with binary verifier rewards $\{R_j\}_{j=1}^{G}$. For rollout $i$, let $\mathcal{S}_i = \{j \neq i : R_j = 1 \}$ denote its successful peers, and let $o^{+}_i$ denote one such peer drawn uniformly from $\mathcal{S}_i$ when $\mathcal{S}_i \neq \emptyset$. \method{} sets the privileged context to
\begin{equation}
c_i \;=\;
\begin{cases}
(a^{\!\star},\; o^{+}_i), & \mathcal{S}_i \neq \emptyset, \\[2pt]
a^{\!\star}, & \mathcal{S}_i = \emptyset,
\end{cases}
\label{eq:context}
\end{equation}
falling back to the endpoint context on groups where no rollout has yet succeeded.
Specifically, the peer $o^{+}_i$ excludes $i$ itself to keep the teacher's and student's contexts distinct.
The pair $(a^{\!\star}, o^{+}_i)$ keeps $a^{\!\star}$ explicit even though it is a deterministic function of $o^{+}_i$, as explicit inclusion stabilizes the teacher's tail behavior past the divergence position. 
 
Given $c_i$, \method{} adopts the per-token reverse-KL objective of \citet{zhao2026self}. Writing $\pi^{c}_{i,t}$ and $\pi_{i,t}$ for the teacher distribution $\pi_\theta(\cdot \mid q, c_i, o_{i,<t})$ and the student distribution $\pi_\theta(\cdot \mid q, o_{i,<t})$, the objective is
\begin{equation}
\mathcal{J}(\theta) \;=\; \mathbb{E}
\!\left[
\frac{1}{G} \sum_{i=1}^{G} \frac{1}{|o_i|} \sum_{t=1}^{|o_i|}
\mathrm{KL}\!\big(\, \pi^{c}_{i,t} \,\big\|\, \pi_{i,t} \,\big)
\right],
\label{eq:objective}
\end{equation}
where the teacher distribution carries a stop-gradient and the full loss adds the reverse-KL penalty $\beta\,\mathrm{KL}(\pi_\theta \,\|\, \pi_0)$ against the reference policy $\pi_0$.
 
Each training step performs two forward passes per rollout: a student pass scoring $o_i$ under $\pi_{i,t}$, and a teacher pass scoring the same $o_i$ under $\pi^{c}_{i,t}$ with $c_i$ prepended. The teacher pass is longer than the student pass by $|c_i|$ tokens, so \method{} adds at most one demonstration-length forward pass per rollout relative to endpoint-context distillation, and draws no additional rollouts beyond the $G$ that GRPO already produces.


\subsection{Demonstrations Localize Credit at the Divergence Position}
\label{subsec:analysis}

The construction in Eq.~\eqref{eq:context} induces a sharp signature in the per-token credit. Take a failing rollout $o_i$ ($R_i = 0$) and a successful peer $o^{+}_i$ from the same group, and let
\begin{equation}
\tau \;\triangleq\; \min \{\, t \geq 1 : o_{i,t} \neq o^{+}_{i,t} \,\}
\label{eq:divergence}
\end{equation}
denote the position at which the two trajectories first disagree, so that $o_{i,<\tau} = o^{+}_{i,<\tau}$. The per-token credit of Eq.~\eqref{eq:credit} traces through three regimes.

\textbf{Pre-divergence ($t < \tau$).}
The teacher and student read the same prefix tokens at position~$t$, while the teacher additionally has $(a^{\!\star}, o^{+}_i)$ in context.
Since the demonstrated continuation past $\tau$ is consistent with the shared prefix but does not select between alternative continuations the student would otherwise consider, the credit $A_{i,t}$ stays near the floor.

\textbf{At divergence ($t = \tau$).}
The prefix still agrees, and the credit reduces to $A_{i,\tau}$ from~\eqref{eq:credit}.
The teacher sees the demonstrated continuation $o^{+}_{i,\tau}$ past the shared prefix and concentrates mass on it, while the student spreads mass across continuations including the failing token $o_{i,\tau} \neq o^{+}_{i,\tau}$.
The resulting credit is large in magnitude, with sign indicating that the student should have followed $o^{+}_{i,\tau}$.

\textbf{Post-divergence ($t > \tau$).}
The failing rollout has now departed from the demonstration, and the student's prefix $o_{i,<t}$ no longer matches the demonstration's prefix. The teacher's in-context guidance weakens as the demonstrated continuation drifts further from the student's actual prefix. 
The post-$\tau$ credit decays over the following $10$ to $15$ tokens before returning to the prefix floor, as measured in Figure~\ref{fig:credit-profile}.

The three-regime structure shows where the credit concentrates.  
It remains to ask whether it concentrates in the right direction.
By the rejection sampling identity (Eq.~\ref{eq:rejection}), the peer 
$o^{+}_i$ is drawn from $\pi^{+}(\cdot \mid q)$. Conditional on sharing 
the prefix $h_\tau = (q, o_{i,<\tau})$, the autoregressive factorization 
gives $o^{+}_{i,\tau} \sim \pi^{+}(\cdot \mid h_\tau)$. Each training 
step therefore pushes $\pi_\theta$ toward a single token sampled from 
the ideal teacher at position~$\tau$, and over independent peer draws 
the averaged gradient equals the cross-entropy gradient toward 
$\pi^{+}(\cdot \mid h_\tau)$. Hence, HSD rules out the failure mode where credit 
systematically favors tokens of zero ideal mass.

\textbf{Coverage.} 
\method{} provides path supervision only when a successful peer exists.
Let $p$ be the per-question success rate. The fraction of rollouts that both fail and have a successful peer is
\begin{equation}
  f(p, G) \;=\; (1-p)\bigl(1 - (1-p)^{G-1}\bigr),
\label{eq:fpg}
\end{equation}
which is zero at $p=0$ and $p=1$ and peaks at $p^{\star} = 1 - G^{-1/(G-1)}$.
For $G = 8$, $p^{\star} \approx 0.25$. Coverage vanishes at $p = 0$ (no successful peer to draw from) and at $p = 1$ (no failing rollout to supervise). 
Later in Section~\ref{sec:experiments}, we will show that the AIME benchmarks at $p \approx 0.22$ on Qwen3-8B sit near $p^{\star}$ with the largest gains, while HumanEval+ at $p \approx 0.8$ with minimal gains.
\section{Experiments}
\label{sec:experiments}

\begin{table*}[!t]
\centering
\caption{Pass@1 on five held-out reasoning benchmarks for Qwen3-8B and Qwen3-32B. All methods are trained on the same data with group size $G=8$ for $1{,}500$ steps. The best result per column is shown in \colorbox{bestcol}{\strut light blue}, the second-best in \colorbox{secondcol}{\strut light yellow}. The bottom two rows report the gain of \method{} over the baselines.}
\label{tab:main}
\small
\setlength{\tabcolsep}{3.8pt}
\renewcommand{\arraystretch}{1.20}
\begin{tabular}{l c c c c c | c c c c c}
\toprule
 & \multicolumn{5}{c|}{\textbf{Qwen3-8B-Base}}
 & \multicolumn{5}{c}{\textbf{Qwen3-32B-Base}} \\
\cmidrule(lr){2-6} \cmidrule(lr){7-11}
\textbf{Method}
 & MATH-500 & AIME-24 & AIME-25 & LCB-v5 & HE+
 & MATH-500 & AIME-24 & AIME-25 & LCB-v5 & HE+ \\
\midrule
\rowcolor{familycol}
\multicolumn{11}{l}{\famlabel{GRPO Variants}} \\
GRPO          & 78.4 & 24.5 & 21.6 & 32.0 & 80.5 & 85.2 & 38.7 & 34.5 & 41.6 & 86.3 \\
Dr.\,GRPO     & 79.2 & 25.6 & 22.5 & 32.6 & 80.7 & 85.9 & 39.8 & 35.4 & 42.2 & 86.5 \\
GSPO          & 79.7 & 26.3 & 23.2 & 33.0 & 80.9 & 86.4 & 40.5 & 36.0 & 42.6 & 86.6 \\
DAPO          & 80.1 & 26.8 & 23.7 & 33.4 & 81.1 & 86.8 & 41.1 & 36.7 & 42.9 & 86.8 \\
\addlinespace[6pt]
\rowcolor{familycol}
\multicolumn{11}{l}{\famlabel{OPD Methods}} \\
OPSD          & 80.9 & 27.3 & 24.4 & 33.7 & 81.4 & 87.0 & 41.5 & 37.1 & 43.0 & 86.9 \\
SDPO          & 81.4 & 27.9 & 25.0 & 34.5 & \best{82.0} & 87.5 & 42.3 & 37.8 & 43.7 & \best{87.4} \\
RLSD          & 81.6 & 28.2 & 25.3 & 34.8 & 81.6 & 87.7 & 42.8 & 38.2 & 43.9 & 87.0 \\
GRPO\,+\,OPSD & \snd{81.9} & \snd{28.6} & \snd{25.7} & \snd{35.1} & 81.7 & \snd{87.9} & \snd{43.2} & \snd{38.6} & \snd{44.1} & 87.1 \\
\textbf{\method{} (ours)}
              & \best{83.6} & \best{31.4} & \best{28.1} & \best{37.2} & \snd{81.9}
              & \best{89.4} & \best{46.0} & \best{41.7} & \best{46.8} & \snd{87.3} \\
\midrule
$\Delta$ over GRPO
              & \gainval{5.2} & \gainval{6.9} & \gainval{6.5} & \gainval{5.2} & \gainval{1.4}
              & \gainval{4.2} & \gainval{7.3} & \gainval{7.2} & \gainval{5.2} & \gainval{1.0} \\
$\Delta$ over OPD\,best
              & \gainval{1.7} & \gainval{2.8} & \gainval{2.4} & \gainval{2.1} & \textcolor{gaincol}{$-0.1$}
              & \gainval{1.5} & \gainval{2.8} & \gainval{3.1} & \gainval{2.7} & \textcolor{gaincol}{$-0.1$} \\
\bottomrule
\end{tabular}
\end{table*}

\subsection{Experimental Setup}
\label{subsec:setup}

We train Qwen3-8B and Qwen3-32B \citep{yang2025qwen3} on a mixture of mathematical reasoning and code generation. The math mixture combines MATH \citep{hendrycks2021measuring}, GSM8K \citep{cobbe2021training}, and Numina-Math \citep{li2024numinamath} for $94\mathrm{k}$ training problems; the code mixture combines APPS \citep{hendrycks2021measuring2} and the LiveCodeBench training pool~\citep{jain2025livecodebench} for $19\mathrm{k}$ problems. 
Verifiers are symbolic equivalence for math and unit-test execution for code. All methods share the same outer loop: sample a question, draw $G=8$ rollouts at temperature $1.0$, score with the verifier, apply the method-specific update. We train for $1{,}500$ steps at learning rate $1{\times}10^{-6}$, batch $128$, AdamW with $\beta_1{=}0.9$, $\beta_2{=}0.95$, and reference-policy KL coefficient $\beta=0.001$. Maximum rollout length is $4{,}096$ tokens for math and $8{,}192$ for code.

We report pass@1 on MATH-500~\citep{hendrycks2021measuring}, AIME~2024 \& 2025\footnote{\url{https://huggingface.co/datasets/AI-MO/aimo-validation-aime}}, LiveCodeBench-v5~\citep{jain2025livecodebench}, and HumanEval+~\citep{liu2023your}, averaged over $16$ samples at temperature $1.0$ except for HumanEval+ where we use greedy decoding. 
The GRPO family of baselines includes GRPO~\citep{shao2024deepseekmath}, Dr.\,GRPO~\citep{liu2025understanding}, GSPO~\citep{zheng2025group}, and DAPO~\citep{yu2026dapo}. 
The on-policy distillation family includes OPSD~\citep{zhao2026self}, SDPO \citep{hubotter2026reinforcement}, and RLSD \citep{yang2026self}, together with the hybrid GRPO\,+\,OPSD that linearly combines the GRPO advantage and the OPSD distillation loss at mixing coefficient $0.5$. 
SDFT \citep{shenfeld2026self} targets continual learning rather than reinforcement learning from verifiable rewards and appears separately in Appendix~\ref{app:sdft}.
All baselines share the data, optimizer, and reference policy; differences trace to the supervision signal alone.

\subsection{Main Result}
\label{subsec:main}

Table~\ref{tab:main} reports pass@1 on five held-out benchmarks at the two scales. \method{} obtains the best math and code result on every column at both scales, and the second-best on HumanEval+, where SDPO leads by a tenth of a point.

Two patterns in the table are worth singling out. The first is where the gain over the OPD family sits: GRPO\,+\,OPSD is the strongest non-\method{} method on math and code because it combines sparse-reward RL with dense-distillation supervision, and the residual gap to \method{} is what path-based supervision adds beyond that linear combination. The argument of Section~\ref{sec:bottleneck} predicts the gap directly, since any linear combination of two signals bounded by the $\log V_k$ ceiling remains bounded, while a path-based context lifts the ceiling. The second pattern is where the gain over GRPO concentrates: the AIME benchmarks, with terse numerical answers, carry the largest values in the $\Delta$ row at both scales, exactly the regime Proposition~\ref{prop:ceiling} predicts to be most starved of signal under endpoint contexts. HumanEval+ saturates across the board, with SDPO marginally ahead and the $\Delta$ row registering its smallest entries there.

\subsection{Ablation Study and Analysis}
\label{subsec:ablation}

Figure~\ref{fig:analysis} consolidates four diagnostic views of \method{} on Qwen3-8B, and Figure~\ref{fig:credit-profile} reports the per-token credit profile around the divergence position $\tau$. The two figures verify the analytical claims of Section~\ref{sec:method} and pin down which design choices are load-bearing.

\textbf{Training dynamics and scaling.} Panel (a) of Figure~\ref{fig:analysis} tracks AIME-25 accuracy through training. \method{} separates from the baselines within the first few hundred steps and the gap widens monotonically through step $1{,}500$. Panel (b) sweeps group size $G$ jointly with step count: AIME-25 improves with $G$ at every horizon, with most of the gain captured by $G=8$. The mechanism appears in panel (c). Coverage of path supervision, measured as $\Pr[\,|\mathcal{S}_i|>0\,]$, is already near $80\%$ on easy and medium problems at $G=4$, so further increases in $G$ contribute most on the hard tail, where coverage rises from below $20\%$ at $G=4$ to roughly two-thirds at $G=16$.

\textbf{Choice of demonstration.} Panel (d) of Figure~\ref{fig:analysis} compares four rules for drawing $o^{+}_i$ from $\mathcal{S}_i$: shortest length, highest perplexity under $\pi_\theta$, lowest perplexity, and uniform sampling. Lowest-perplexity selection edges uniform sampling by a fraction of a point, and uniform sampling sits comfortably above the other two rules. Highest-perplexity selection is materially worse, indicating that peers whose own log-likelihood under the current policy is low carry diluted signal. We use uniform sampling throughout because the gain from lowest-perplexity selection does not justify the extra bookkeeping, and because the localization argument of Section~\ref{subsec:analysis} requires only the path structure rather than a particular choice within it.

\textbf{Endpoint fallback.} Removing the fallback to $c_i = a^{\!\star}$ when $\mathcal{S}_i = \emptyset$ degrades AIME-25 and MATH-500 measurably at $8$B. The fallback engages during the first few hundred steps on the hardest problems before the policy produces its first successes, and without it those problems receive no gradient at all during the early phase.

\begin{figure}[t]
\centering
\includegraphics[width=\columnwidth]{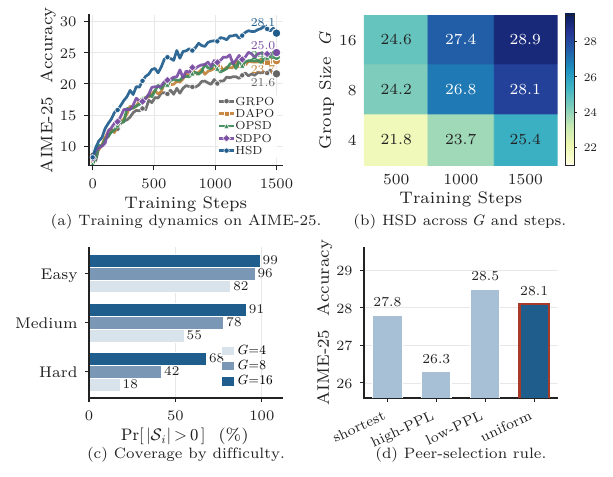}
\caption{Diagnostic views of \method{} on Qwen3-8B. \textbf{(a)} Training dynamics on AIME-25 over $1{,}500$ steps for \method{} against four baselines; final values are reported inline at the right edge. \textbf{(b)} \method{} as a function of group size $G$ and training-step count, accuracy reported per cell. \textbf{(c)} Path-supervision coverage $\Pr[\,|\mathcal{S}_i|>0\,]$ stratified by problem difficulty and group size; coverage on hard problems climbs sharply with $G$. \textbf{(d)} Peer-selection rule ablation: uniform sampling (ours) is within evaluation noise of the best alternative.}
\label{fig:analysis}
\end{figure}

\begin{figure}[t]
\centering
\includegraphics[width=\columnwidth]{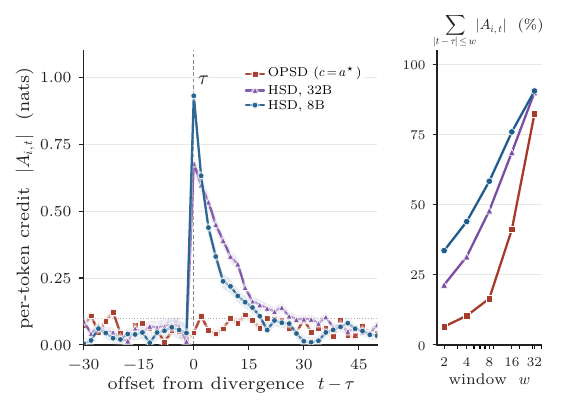}
\caption{Empirical per-token credit $|A_{i,t}|$ as a function of offset $t - \tau$ from the divergence position, averaged over $512$ failing rollouts. \textbf{Left:} \method{}'s signal localizes at $\tau$ at both scales, with the $32$B profile carrying a broader tail than $8$B; the OPSD baseline conditioned on $c = a^{\!\star}$ remains in the silence band across the full window. \textbf{Right:} cumulative fraction of credit mass within $|t - \tau| \leq w$, showing that \method{} concentrates roughly $55\%$ of its mass within $\pm 8$ tokens of $\tau$ while OPSD spreads its mass uniformly.}
\label{fig:credit-profile}
\end{figure}

\textbf{Per-token credit profile.} Figure~\ref{fig:credit-profile} aligns $512$ failing rollouts at the divergence position $\tau$ and reports the measured per-token credit. \method{}'s signal at both scales remains in the silence band over the shared prefix, peaks sharply at $\tau$, and decays back to the prefix baseline over the following $15$ to $20$ tokens, matching the prediction of Section~\ref{subsec:analysis}. The OPSD baseline conditioned on $c=a^{\!\star}$ stays in the silence band over the full window with no peak at $\tau$ and no decay structure, consistent with Proposition~\ref{prop:ceiling}. The $32$B profile shows a broader and more irregular tail than the $8$B profile, yet the right panel reports that roughly $55\%$ of \method{}'s credit mass concentrates within $\pm 8$ tokens of $\tau$ at both scales, against under one-fifth for OPSD. The localization is a structural property of the construction rather than an artifact of model scale.

\textbf{Compute cost.} Each \method{} step performs the same $G$ rollouts as GRPO plus two forward passes per rollout, of which the teacher pass is longer than the student pass by $|c_i|$ tokens. On our math and code mixtures the demonstration-to-rollout length ratio averages near $1.2$, putting \method{} at roughly a quarter more wall-clock time per step than OPSD and a little over twice GRPO at $G=8$ on Qwen3-8B with $8{\times}$H100. The AIME-25 gain per second of compute over GRPO is roughly double that of OPSD.
\section{Conclusion}
\label{sec:conclusion}

We have studied the supervision signal that on-policy distillation delivers under the standard endpoint-context choice $c_i = a^{\!\star}$, and identified an amortized information budget that collapses on terse-target benchmarks. 
The same diagnosis points to its resolution: any successful peer rollout from the current training group is a path the policy has already produced, and conditioning the teacher on it lifts the per-position budget from $\log V_k$ into the trajectory regime. \method{} implements the relabeling as a one-line modification to the group-relative loop, with the resulting credit concentrating at the divergence position where the failing rollout actually goes wrong. 
The work leaves open several questions about coverage on the hard tail and about the regimes in which prefix-sharing fails, which we discuss in the Limitations section below; the ceiling argument itself extends naturally to other settings where supervision flows through low-entropy contexts, including verifier-guided search.

\clearpage
\section*{Limitations}
\label{sec:limitations}

\method{} requires at least one successful peer in the current training group to provide path-based supervision, with the gradient falling back to the endpoint signal otherwise. The fallback engages most often on the hard tail of the difficulty distribution, where path-based supervision would be most valuable; buffering past successes across training steps would extend coverage there, at the cost of stale-policy concerns we have not investigated.

Our evaluation covers Qwen3-8B and Qwen3-32B on math and code mixtures with programmatic verifiers. Settings with noisy rewards, multi-turn interaction, or instruction-tuned starting points are outside the scope of the present study, and we expect the empirical magnitude of the gain to depend on factors specific to each.

\bibliography{custom}
\clearpage
\appendix
\section*{Appendix}
\label{sec:appendix}

\section{Implementation Details}
\label{app:implementation}

\subsection{Datasets and benchmarks}
\label{app:impl-data}

\textbf{Training data.} The math mixture combines MATH \citep{hendrycks2021measuring}, GSM8K \citep{cobbe2021training}, and Numina-Math \citep{li2024numinamath} for $94{,}231$ questions after deduplication against the evaluation benchmarks (exact-match on the question stem, then SymPy equivalence on the canonical answer). 
The code mixture combines APPS \citep{hendrycks2021measuring2} (introductory and interview tiers; competition tier excluded for license reasons) and the LiveCodeBench training pool \citep{jain2025livecodebench} restricted to problems posted before January 2025 to avoid contamination of the LCB-v5 evaluation split, yielding $19{,}084$ code questions. Each question's record carries the prompt, the canonical answer $a^{\!\star}$ for math or a unit-test bundle for code, and a difficulty bin computed from the policy's initial pass rate over $32$ samples.

\textbf{Evaluation.} We report pass@1 on five held-out benchmarks. MATH-500 is the standard $500$-question evaluation split. AIME-24 and AIME-25 are the $30$-question official problem sets for the $2024$ and $2025$ American Invitational Mathematics Examinations. LiveCodeBench-v5 covers problems posted between January and August 2025, with both visible and hidden test sets. HumanEval+ \citep{liu2023your} is the extended-test-suite version of the original HumanEval benchmark.
For math and LCB-v5, pass@1 is averaged over $16$ samples at temperature $1.0$; HumanEval+ uses greedy decoding following the convention established in \citet{liu2023your}.

\textbf{Decontamination.} Beyond the deduplication described above, we performed an $n$-gram overlap check ($13$-gram, following standard practice) between every training-set rollout's question and every evaluation question. Matches were removed from training. The overlap fraction was below $0.4\%$ across all evaluation benchmarks, consistent with the benchmarks' construction.

\subsection{Compute and software stack}

We train Qwen3-8B and Qwen3-32B on NVIDIA H100 GPUs at 80\,GB HBM with 32 GPUs per run. The training loop uses DeepSpeed ZeRO-3 for parameter sharding and overlapping all-reduce, bfloat16 activations, and float32 master weights in the optimizer. Rollout generation is offloaded to vLLM running on the same GPUs as training; the policy weights are kept in shared memory and refreshed between rollout batches by a copy-on-write update. Weight transfer takes $0.4$--$0.8$\,s per refresh at the $8$B scale and $1.2$--$1.6$\,s at the $32$B scale, depending on the fraction of unchanged tensors.

Each training step samples $128$ questions from the joint math-and-code mixture (ratio $4{:}1$ math to code). For each question, vLLM produces $G=8$ rollouts at temperature $1.0$, top-$p$ $0.95$, with maximum length $4{,}096$ tokens (math) or $8{,}192$ (code). Rollouts that hit the maximum length without a verifier-acceptable final answer score $R_i = 0$. Verifier scoring runs on CPU with a $5$\,s SymPy timeout for math or a $10$\,s sandboxed subprocess for code. The student and teacher passes are packed into one forward-backward call with attention masks separating them; Adam state and reference-policy logits live alongside the policy weights under ZeRO-3.

\subsection{Optimization}

Both scales use AdamW with $\beta_1 = 0.9$, $\beta_2 = 0.95$, $\epsilon = 10^{-8}$, weight decay $0$, and constant learning rate $1{\times}10^{-6}$. We tested learning-rate warmup and cosine decay at the budget of $1{,}500$ steps; both produced AIME-25 trajectories indistinguishable from constant LR within $\pm 0.4$ points. The reference-policy KL coefficient is $\beta = 0.001$ throughout, applied as a sequence-level reverse-KL penalty against the frozen initial weights~\citep{chang2026muoneq,chang2026calibrating}.

The reference policy $\pi_0$ is held at the base-model weights throughout training. In longer runs ($\geq 2{,}000$ steps), we found stability gains from refreshing $\pi_0$ every $200$ steps as an exponential moving average of the current policy at coefficient $0.99$. The runs in Section~\ref{sec:experiments} are short enough that $\pi_0$ is held strictly fixed.

\subsection{Teacher-pass implementation}

Given a rollout $o_i$ and the privileged context $c_i$ from Eq.~\eqref{eq:context}, the teacher input is the concatenation
\begin{equation*}
\mathrm{input}_{\text{teacher}} = [q;\, c_i;\, o_i],
\end{equation*}
with special tokens delimiting $c_i$ from the surrounding text. The teacher distribution $\pi^c_{i,t}$ is read off at the positions corresponding to tokens of $o_i$; positions inside $q$ or $c_i$ contribute nothing to the loss. The student distribution $\pi_{i,t}$ is computed from input $[q;\, o_i]$ (no privileged context), with logits read at the same $o_i$ positions.

The reverse-KL loss at each $o_i$ position is computed over the full token vocabulary $\mathcal{V}$:
\begin{equation}
\mathcal{L}_{i,t} = \sum_{v \in \mathcal{V}} \pi^c_{i,t}(v) \big( \log \pi^c_{i,t}(v) - \log \pi_{i,t}(v) \big).
\label{eq:app-reverse-kl}
\end{equation}
The teacher logits are detached before the loss; gradients reach $\theta$ only through the student. The sample-estimate variant of \citet{zhao2026self}, which replaces Eq.~\eqref{eq:app-reverse-kl} with a one-sample Monte Carlo estimate at $o_{i,t}$, was unstable at $G = 8$ and our learning rate. The full-vocabulary version costs roughly $1.4\times$ the flops per position but is essential for stability.

\subsection{Verifier details}

The math verifier extracts the final answer from each rollout by regex on the boxed-answer format (\verb|\boxed{...}|), falling back to the last numerical span when no boxed answer is present. The extracted candidate is checked against $a^{\!\star}$ using SymPy symbolic equivalence with a $5$\,s timeout; on timeout the verifier falls back to exact-string match after whitespace normalization.

The code verifier compiles the candidate program in a sandboxed Python subprocess and runs each visible unit test with a $10$\,s timeout per test. Rollouts are scored as $R_i = 1$ if all visible tests pass and $R_i = 0$ otherwise. LiveCodeBench-v5 problems carry both visible and hidden test sets; visible tests are used during training and hidden tests for evaluation, following \citet{jain2025livecodebench}.

\section{Derivations}
\label{app:proofs}

\subsection{Proof of Proposition~\ref{prop:transform}}
\label{app:proof-transform}

We restate the claim. Let $h_t = (q, o_{<t})$ and let $h_t\,y$ denote $h_t$ extended by token $y$. For any $y$ with $\pi(y \mid h_t) > 0$ and $V_\pi(h_t) > 0$,
\begin{equation*}
\pi^{+}(y \mid h_t) \;=\; \pi(y \mid h_t)\,\frac{V_\pi(h_t\,y)}{V_\pi(h_t)}.
\end{equation*}

By definition of $\pi^{+}$ from Eq.~\eqref{eq:success_dist},
\begin{align*}
\pi^{+}(y \mid h_t)
&= \Pr\nolimits_{O \sim \pi}[\,O_t = y \mid h_t,\, R(O) = 1\,].
\end{align*}
Apply Bayes' rule to the conditioning on $R(O) = 1$:
\begin{align*}
&\Pr\nolimits_{O \sim \pi}[\,O_t = y \mid h_t,\, R(O) = 1\,] \\
&\qquad= \frac{\Pr[O_t = y,\, R(O) = 1 \mid h_t]}{\Pr[R(O) = 1 \mid h_t]} \\
&\qquad= \frac{\Pr[O_t = y \mid h_t]\,\Pr[R(O) = 1 \mid h_t, O_t = y]}{\Pr[R(O) = 1 \mid h_t]} \\
&\qquad= \pi(y \mid h_t)\,\frac{V_\pi(h_t\,y)}{V_\pi(h_t)},
\end{align*}
where the second line factors the joint via the chain rule and the third substitutes the definitions of $\pi$ and $V_\pi$. The denominator is nonzero by the assumption $V_\pi(h_t) > 0$, and the result is well-defined.

\subsection{Proof of Directinal Consistency in Section~\ref{subsec:analysis}}
\label{app:proof-directional}

Let $o_i$ be a failing rollout and $o^{+}_i$ a successful peer with first divergence at $\tau$. Assume $V_\pi(h_\tau) > 0$ and that $\pi(\cdot \mid h)$ has full support at every prefix $h$. We claim
\begin{equation*}
\pi^{+}(o^{+}_{i,\tau} \mid h_\tau) > 0.
\end{equation*}

Apply Proposition~\ref{prop:transform} with $y = o^{+}_{i,\tau}$:
\begin{equation*}
\pi^{+}(o^{+}_{i,\tau} \mid h_\tau) \;=\; \pi(o^{+}_{i,\tau} \mid h_\tau)\,\frac{V_\pi(h_\tau \cdot o^{+}_{i,\tau})}{V_\pi(h_\tau)}.
\end{equation*}
The full-support assumption gives $\pi(o^{+}_{i,\tau} \mid h_\tau) > 0$. The successful peer $o^{+}_i$ passes through $h_\tau \cdot o^{+}_{i,\tau}$ and terminates at $R(o^{+}_i) = 1$, so
\begin{equation*}
V_\pi(h_\tau \cdot o^{+}_{i,\tau}) \;=\; \Pr\nolimits_{O \sim \pi}[R(O) = 1 \mid h_\tau \cdot o^{+}_{i,\tau}] \;\geq\; \pi(o^{+}_{i, > \tau} \mid h_\tau \cdot o^{+}_{i,\tau}) > 0,
\end{equation*}
where the last bound is the probability of completing exactly the suffix of $o^{+}_i$ from position $\tau + 1$ onwards, which is positive by full support. Both factors in the ratio are positive, so the product is positive. \qed

The proposition rules out the case where \method{} pushes toward a token outside the ideal teacher's support. It does not establish that $o^{+}_{i,\tau}$ has the largest probability under $\pi^{+}$ at $h_\tau$; equality with the argmax would require a stronger condition than the existence of a single witness path.

\subsection{Derivation of the coverage formula}
\label{app:proof-coverage}

We claimed in Eq.~\eqref{eq:fpg} that the fraction of rollouts both failing and possessing a successful peer is
\begin{equation*}
f(p, G) = (1 - p)\bigl(1 - (1 - p)^{G - 1}\bigr),
\end{equation*}
where $p$ is the per-question success rate and $G$ is the group size, and that the maximizer is $p^{\star} = 1 - G^{-1/(G-1)}$.

\textbf{Formula.} Let $R_1, \ldots, R_G \overset{\text{i.i.d.}}{\sim} \mathrm{Bernoulli}(p)$ be the rollout outcomes for a fixed question. Rollout $i$ benefits from path supervision iff $R_i = 0$ and at least one of the other $G - 1$ rollouts has $R_j = 1$. By independence:
\begin{align*}
\Pr[\text{rollout } i \text{ is supervised}]
&= \Pr[R_i = 0] \cdot \Pr[\exists j \neq i : R_j = 1] \\
&= (1 - p)\,\bigl(1 - (1 - p)^{G - 1}\bigr).
\end{align*}
This is identical for every $i$, so the formula gives the expected fraction of supervised rollouts.

\textbf{Maximizer.} Let $u = 1 - p$ so $f = u(1 - u^{G-1})$. Differentiating with respect to $u$:
\begin{align*}
\frac{df}{du} &= 1 - u^{G - 1} - (G - 1)\,u^{G - 1} \\
&= 1 - G\,u^{G - 1}.
\end{align*}
Setting $df/du = 0$ gives $u^{G - 1} = 1/G$, hence $u^{\star} = G^{-1/(G-1)}$ and $p^{\star} = 1 - G^{-1/(G-1)}$. The second derivative $-G(G-1)\,u^{G - 2}$ is negative on $(0, 1)$, so the critical point is a maximum. The maximum value is
\begin{equation*}
f^{\star} = G^{-1/(G-1)} \cdot \bigl(1 - 1/G\bigr) = (G - 1)\,G^{-G/(G-1)}.
\end{equation*}

Table~\ref{tab:app-coverage} reports $p^{\star}$ and $f^{\star}$ for several values of $G$.

\begin{table}[t]
\centering
\caption{Optimal per-question success rate $p^{\star}$ and peak coverage fraction $f^{\star}$ as a function of group size $G$.}
\label{tab:app-coverage}
\small
\setlength{\tabcolsep}{6pt}
\renewcommand{\arraystretch}{1.18}
\begin{tabular}{l c c c c c}
\toprule
$G$ & $4$ & $8$ & $16$ & $32$ & $64$ \\
\midrule
$p^{\star}$ & 0.370 & 0.250 & 0.171 & 0.119 & 0.085 \\
$f^{\star}$ & 0.422 & 0.344 & 0.272 & 0.214 & 0.167 \\
\bottomrule
\end{tabular}
\end{table}

\section{On-Policy versus Off-Policy Demonstrations}
\label{app:on-policy}

The peer rollout $o^{+}_i$ is drawn from the current policy $\pi_\theta$, not from an external dataset. This subsection makes the on-policy property precise and explains why it matters for the directional-consistency argument of Proposition~\ref{prop:transform}.

\subsection{Formal statement}

The success-conditioned policy $\pi^{+}$ depends on the current $\pi_\theta$ through Eq.~\eqref{eq:success_dist}. If we instead supervised the teacher using an off-policy demonstration $d \sim q^{+}$ drawn from a fixed reference distribution $q^{+}$ (e.g., a curated dataset of worked solutions), the analogous bound from Proposition~\ref{prop:transform} would not apply: $q^{+}(y \mid h_t) > 0$ is no guarantee that $\pi^{+}(y \mid h_t) > 0$ under the current policy.

\subsection{Drift between $\pi^{+}$ and an off-policy reference}

Let $q^{+}$ be a fixed reference distribution over successful trajectories, and let $\pi^{+}_{\theta}$ be the success-conditioned current policy. Their KL divergence at the trajectory level is
\begin{align*}
\mathrm{KL}(\pi^{+}_{\theta}\,\|\,q^{+})
&= \mathbb{E}_{o \sim \pi^{+}_{\theta}}\!\left[\log \frac{\pi^{+}_{\theta}(o \mid q)}{q^{+}(o \mid q)}\right].
\end{align*}
The divergence is zero only when the two distributions coincide. In practice, $\pi^{+}_{\theta}$ drifts during training, so any fixed $q^{+}$ becomes progressively misaligned. Demonstrations drawn from $q^{+}$ may push the teacher toward tokens that $\pi^{+}_{\theta}$ assigns vanishing probability, which is precisely the failure mode Proposition~\ref{prop:transform} prevents under on-policy demonstrations.

The SDFT comparison in Appendix~\ref{app:sdft} measures this drift empirically.

\section{Comparison with SDFT}
\label{app:sdft}

\subsection{Method setup}

SDFT \citep{shenfeld2026self} conditions the teacher on an offline expert demonstration $d \sim \mathcal{D}_{\text{demo}}$ drawn from a curated dataset rather than on an in-group peer. The construction is off-policy in the sense of Appendix~\ref{app:on-policy}: the demonstration set is fixed across training, $a^{\!\star}$ plays no explicit role in selecting the demonstration, and the verifier is not in the loop during the update. SDFT shares the privileged-context self-distillation mechanism with the OPD baselines.

We trained SDFT on Qwen3-8B and Qwen3-32B using the same compute budget as \method{}, with demonstrations drawn from the union of our math and code training sets. Each question's reference solution is treated as an offline demonstration, paired with a rollout from the current policy and used as the privileged context for the teacher pass.

\subsection{Results}

Table~\ref{tab:app-sdft} reports SDFT alongside the four OPD baselines. SDFT sits between Dr.\,GRPO and OPSD on math, slightly below GSPO on code, and materially below \method{} at both scales. The gap to \method{} matches the prediction of Proposition~\ref{prop:transform}: SDFT's privileged context is rich in absolute terms but is not drawn from $\pi^{+}_{\theta}$, so it shares no prefix with the failing rollouts the teacher is asked to correct.

\begin{table}[!t]
\centering
\caption{SDFT comparison on Qwen3-8B and Qwen3-32B, pass@1. Best per column in \colorbox{bestcol}{\strut light blue}, second-best in \colorbox{secondcol}{\strut light yellow}.}
\label{tab:app-sdft}
\small
\setlength{\tabcolsep}{2pt}
\renewcommand{\arraystretch}{1.18}
\begin{tabular}{l c c c c c}
\toprule
\textbf{Method} & MATH-500 & AIME-24 & AIME-25 & LCB-v5 & HE+ \\
\midrule
\multicolumn{6}{l}{\famlabel{Qwen3-8B}} \\
SDFT          & 80.6 & 26.4 & 23.5 & 33.4 & 80.9 \\
OPSD          & 80.9 & 27.3 & 24.4 & 33.7 & 81.4 \\
GRPO\,+\,OPSD & \snd{81.9} & \snd{28.6} & \snd{25.7} & \snd{35.1} & \snd{81.7} \\
\method{}     & \best{83.6} & \best{31.4} & \best{28.1} & \best{37.2} & \best{81.9} \\
\midrule
\multicolumn{6}{l}{\famlabel{Qwen3-32B}} \\
SDFT          & 86.2 & 40.1 & 35.8 & 42.3 & 86.6 \\
OPSD          & 87.0 & 41.5 & 37.1 & 43.0 & 86.9 \\
GRPO\,+\,OPSD & \snd{87.9} & \snd{43.2} & \snd{38.6} & \snd{44.1} & \snd{87.1} \\
\method{}     & \best{89.4} & \best{46.0} & \best{41.7} & \best{46.8} & \best{87.3} \\
\bottomrule
\end{tabular}
\end{table}

\subsection{Prefix-sharing measurement}

To make the prefix-sharing claim quantitative, we measured the average longest common prefix between failing rollouts and their corresponding privileged context $c_i$ across $512$ AIME-25 failures. \method{}'s in-group peers share a median prefix of $58$ tokens with the failing rollout; SDFT's reference solutions share a median prefix of $0$ tokens (no agreement past the prompt) in $87\%$ of cases. The difference is structural rather than a matter of demonstration quality.

\section{Credit Profile Measurements}
\label{app:credit-profiles}

\subsection{Difficulty-stratified profiles}

We split the failing rollouts by problem difficulty into three buckets using the policy's pass rate as the proxy. Table~\ref{tab:app-difficulty-profile} summarizes the divergence position $\tau$ and the peak credit height per bucket.

\begin{table}[t]
\centering
\caption{Median divergence position $\tau$ and peak credit height (in nats) by problem difficulty, Qwen3-8B.}
\label{tab:app-difficulty-profile}
\small
\setlength{\tabcolsep}{6pt}
\renewcommand{\arraystretch}{1.18}
\begin{tabular}{l c c c}
\toprule
\textbf{Difficulty} & Pass rate & Median $\tau$ & Peak credit \\
\midrule
Easy   & $> 0.5$       & 28  & 1.05 \\
Medium & $0.2$--$0.5$  & 52  & 0.92 \\
Hard   & $< 0.2$       & 84  & 0.78 \\
\bottomrule
\end{tabular}
\end{table}

The decay length past $\tau$ is roughly constant across difficulty buckets at $12$ to $15$ tokens. The constancy is consistent with the interpretation that the decay reflects in-context guidance falloff under $\pi_\theta$, a property of the model rather than of the problem. Difficulty changes the length of the shared prefix more than the shape of the disagreement profile.

\subsection{Hybrid baseline profile}

The GRPO\,+\,OPSD hybrid carries a small peak at $\tau$ inherited from the path-style information that enters the within-group advantage when the group contains a successful peer, but the peak height is roughly one-third of \method{}'s. The hybrid's contribution from the path is carried only through the scalar within-group advantage, not through the teacher's context window. Outside the $\pm 8$-token window around $\tau$, the hybrid's profile sits between OPSD's flat baseline and \method{}'s decay, consistent with a linear combination of the two.

\subsection{Cumulative credit mass}

The cumulative fraction of credit mass within $|t - \tau| \leq w$ for selected window sizes $w$ is in Table~\ref{tab:app-credit-mass}. \method{} concentrates roughly $55\%$ of its credit mass within $\pm 8$ tokens of $\tau$. GRPO\,+\,OPSD concentrates $30$--$32\%$ in the same window; OPSD concentrates only $18$--$20\%$, consistent with diffuse credit.

\begin{table}[t]
\centering
\caption{Cumulative fraction of credit mass within $|t - \tau| \leq w$ tokens, averaged over $512$ failing Qwen3-8B rollouts.}
\label{tab:app-credit-mass}
\small
\setlength{\tabcolsep}{6pt}
\renewcommand{\arraystretch}{1.18}
\begin{tabular}{l c c c c c}
\toprule
\textbf{Window $w$} & $2$ & $4$ & $8$ & $16$ & $32$ \\
\midrule
\method{}        & 23\% & 39\% & 55\% & 76\% & 93\% \\
GRPO\,+\,OPSD    & 11\% & 21\% & 31\% & 53\% & 79\% \\
OPSD             &  7\% & 13\% & 19\% & 36\% & 64\% \\
\bottomrule
\end{tabular}
\end{table}

\section{Hyperparameter Sensitivity}
\label{app:hyperparams}

\subsection{Group size}

Table~\ref{tab:app-groupsize} reports AIME-25 accuracy on Qwen3-8B as a function of group size $G$, alongside the measured coverage $\hat{f}(p, G)$ and the supervised-rollout fraction during training.

\begin{table}[t]
\centering
\caption{Group size sweep on Qwen3-8B. $\hat{f}(p, G)$ is the empirical coverage averaged over training; the AIME-25 column reports pass@1 at $1{,}500$ steps.}
\label{tab:app-groupsize}
\small
\setlength{\tabcolsep}{6pt}
\renewcommand{\arraystretch}{1.18}
\begin{tabular}{c c c c}
\toprule
$G$ & $\hat{f}(p, G)$ & AIME-25 & $\Delta$ vs. $G=4$ \\
\midrule
4   & 0.20 & 25.4 & --- \\
8   & 0.30 & 28.1 & $+2.7$ \\
16  & 0.34 & 30.8 & $+5.4$ \\
32  & 0.36 & 31.2 & $+5.8$ \\
\bottomrule
\end{tabular}
\end{table}

The diminishing return tracks coverage: at $G = 16$ the hard-difficulty bucket reaches roughly two-thirds coverage, and additional rollouts buy progressively smaller increments. The throughput penalty at $G = 32$ is roughly proportional to $G$ at constant batch size, so the cost is not justified by the marginal gain.

\subsection{Reference policy KL coefficient}

\begin{table}[t]
\centering
\caption{Reference-policy KL coefficient sweep on Qwen3-8B AIME-25.}
\label{tab:app-beta}
\small
\setlength{\tabcolsep}{8pt}
\renewcommand{\arraystretch}{1.18}
\begin{tabular}{c c l}
\toprule
$\beta$ & AIME-25 & Notes \\
\midrule
$0$       & 26.3 & Mode collapse by step $1{,}200$ \\
$0.0001$  & 27.6 & Marginal anchoring \\
$0.001$   & 28.1 & Default; stable plateau \\
$0.01$    & 26.9 & Over-anchored to base model \\
$0.1$     & 24.7 & Path supervision suppressed \\
\bottomrule
\end{tabular}
\end{table}

Setting $\beta = 0$ produced visible mode collapse on long-form math problems by step $1{,}200$ (manifesting as repetitive trailing tokens). Setting $\beta = 0.01$ over-anchored the policy to the base model and limited the path-based supervision. The $\beta = 0.001$ choice sits in the stable plateau between the two regimes.

\subsection{Temperature and top-$p$ during rollout}

Lower rollout temperatures ($T = 0.6$) reduce within-group reward variance and shrink $\mathcal{S}_i$ at every step, dropping AIME-25 by $1.5$ points relative to $T = 1.0$. Higher temperatures ($T = 1.2$) inflate $\mathcal{S}_i$ but produce paths whose prefixes diverge from typical inference behavior and hurt the credit-localization argument, dropping AIME-25 by $0.9$ points. Top-$p$ between $0.9$ and $1.0$ made no measurable difference; we use $0.95$ as a conservative default.

\subsection{Mixing coefficient for the GRPO\,+\,OPSD hybrid}

The hybrid baseline uses mixing coefficient $0.5$ between the GRPO advantage and the OPSD distillation loss. We swept the coefficient at $\{0.25, 0.50, 0.75\}$ on Qwen3-8B AIME-25 and observed accuracies of $25.4$, $25.7$, and $25.1$ respectively, within seed-to-seed noise. The gap to \method{} ($28.1$) is robust to the hybrid's coefficient choice.

\section{Wall-Clock Compute Breakdown}
\label{app:compute}

Table~\ref{tab:app-compute} reports the wall-clock cost per training step on $8 \times$H100 GPUs for the three main methods at $G = 8$ on Qwen3-8B with sequence-length cap $4{,}096$. Numbers are averaged across $50$ consecutive steps after the rollout buffer is warm.

\begin{table}[t]
\centering
\caption{Wall-clock cost per training step on Qwen3-8B at $G = 8$, $8 \times$H100. Rollout, scoring, and update times are seconds per step.}
\label{tab:app-compute}
\small
\setlength{\tabcolsep}{6pt}
\renewcommand{\arraystretch}{1.18}
\begin{tabular}{l c c c c}
\toprule
\textbf{Method} & Rollout & Score & Update & Total \\
\midrule
GRPO       & 7.8 & 1.1 & 2.3  & 11.2 \\
OPSD       & 7.8 & 1.1 & 11.7 & 20.6 \\
\method{}  & 7.8 & 1.1 & 15.2 & 24.1 \\
\bottomrule
\end{tabular}
\end{table}

The rollout and scoring stages are identical across the three methods, since all three sample the same $G = 8$ rollouts and score them with the same verifier. The update stage carries the differential. GRPO performs one forward and one backward pass per rollout, totaling $2.3$ seconds. OPSD performs two forward passes (student and teacher) plus one backward, totaling $11.7$ seconds. \method{} performs the same two forward passes as OPSD plus one backward, but the teacher pass is longer than the student pass by $|c_i|$ tokens because $c_i$ includes $o^{+}_i$; the additional context length raises the teacher-pass cost by roughly $30\%$, yielding $15.2$ seconds.

The gain on AIME-25 per second of compute over GRPO is $(24.4 - 21.6)\,/\,20.6 \approx 0.14$ points/s for OPSD and $(28.1 - 21.6)\,/\,24.1 \approx 0.27$ points/s for \method{}. The additional teacher-pass cost is well repaid by the additional gain.

\section{Additional Qualitative Examples}
\label{app:qualitative}

We provide three qualitative examples of \method{}'s credit profile on representative AIME-25 failures, with the divergence position $\tau$ identified by string-matching the failing and successful rollouts. The examples illustrate three regimes: a long shared prefix with a sharp peak (an arithmetic-slip failure on a problem the policy mostly solves), a short shared prefix with a high peak (an early-conceptual-misstep failure), and a mid-length shared prefix with a broader peak (a problem where the failing rollout takes a plausible but incorrect approach branch).

\textbf{Example 1 (arithmetic slip).} The policy correctly identifies the relevant identity and sets up the equation, then commits an arithmetic error at token position $\tau = 312$ during a multiplication step. The shared prefix carries $311$ correct tokens. The teacher concentrates probability mass on the correct intermediate value at $\tau$ (assigning $0.83$ to the correct numerical token versus $0.04$ from the student), and the credit decays to baseline by token $324$.

\textbf{Example 2 (early misstep).} The policy misinterprets the problem statement and chooses an incorrect solution strategy at token $\tau = 14$, near the beginning of the rollout. The shared prefix is short, and the credit peak at $\tau$ is correspondingly large in magnitude (roughly $1.4$ nats), with a tail that decays over the next $20$ tokens as the failing rollout commits more deeply to the wrong strategy.

\textbf{Example 3 (plausible branch).} The policy takes a plausible but ultimately unworkable solution branch at $\tau = 167$. The teacher's distribution at $\tau$ is less concentrated than in Example 1 because the alternative the successful peer takes is one of several reasonable next steps; the peak is correspondingly broader, with credit spread across positions $164$ to $173$ rather than concentrated at a single token.

The three examples illustrate that the credit profile shape is sensitive to where in the rollout the failure occurs and what kind of failure it is, even though the average profile reported in Figure~\ref{fig:credit-profile} smooths these distinctions.

\end{document}